\title[Modular Constraint Solver Cooperation via Abstract Interpretation]
      {Modular Constraint Solver Cooperation\\ via Abstract Interpretation$^\text{\footnotemark{}}$
      \footnotetext{This work was partially supported by ANR-15-CE25-0002 Coverif from the French \textit{Agence Nationale de la Recherche}. The \textit{Centre de calcul intensif des Pays de la Loire} (CCIPL) provided the infrastructure to perform the benchmarks. The authors thank the anonymous reviewers for their constructive comments to improve the clarity of the paper. We thank Yinghan Ling for the English proofreading.}}
\author[P. Talbot, E. Monfroy and C. Truchet]
       {PIERRE TALBOT\\
       Interdisciplinary Centre for Security, Reliability and Trust (SNT),\\
       University of Luxembourg, Esch-sur-Alzette, Luxembourg
        \email{pierre.talbot@uni.lu}
        \and \'{E}RIC MONFROY\\
        University of Angers, Angers, France
        \email{eric.monfroy@univ-angers.fr}
        \and CHARLOTTE TRUCHET\\
        University of Nantes, Nantes, France
        \email{charlotte.truchet@univ-nantes.fr}}
\renewcommand{\cite}[1]{\citep{#1}}
\newcommand{\pf}[2]{[#1 \nrightarrow #2]}
\def\vars{V}
\newcommand{\cd}[1]{#1^{\flat}}
\newcommand{\ad}[1]{#1^{\sharp}}
\newcommand{\cdi}[1]{\cd{\llbracket #1 \rrbracket}}
\newcommand{\adi}[1]{\ad{\llbracket #1 \rrbracket}}
\def\ttrue{\mathit{true}}
\def\ffalse{\mathit{false}}
\def\unk{\mathit{unknown}}
\newcommand\eqdef{\,\triangleq\,}
\definecolor{bleu}{rgb}{0,0.6,0.8}
\definecolor{rouge}{rgb}{1.,0.,0.2}
\definecolor{vert}{rgb}{0,0.4,0.4}
\definecolor{ForestGreen}{rgb}{.132,.545,.132}
\newcounter{thm}
\newtheorem{definition}[thm]{Definition}
\newtheorem{lemma}[thm]{Lemma}
\newtheorem{proposition}[thm]{Proposition}
\begin{document}

\maketitle

\begin{abstract}
Cooperation among constraint solvers is difficult because different solving paradigms have different theoretical foundations.
Recent works have shown that abstract interpretation can provide a unifying theory for various constraint solvers.
In particular, it relies on abstract domains which capture constraint languages as ordered structures.
The key insight of this paper is viewing cooperation schemes as abstract domains combinations.
We propose a modular framework in which solvers and cooperation schemes can be seamlessly added and combined.
This differs from existing approaches such as SMT where the cooperation scheme is usually fixed (\emph{e.g.}, Nelson-Oppen).
We contribute to two new cooperation schemes: (i) \textit{interval propagators completion} that allows abstract domains to exchange bound constraints, and (ii) \textit{delayed product} which exchanges over-approximations of constraints between two abstract domains.
Moreover, the delayed product is based on delayed goal of logic programming, and it shows that abstract domains can also capture control aspects of constraint solving.
Finally, to achieve modularity, we propose the \textit{shared product} to combine abstract domains and cooperation schemes.
Our approach has been fully implemented, and we provide various examples on the flexible job shop scheduling problem.
Under consideration for acceptance in TPLP.
\end{abstract}

\begin{keywords}
abstract domains, solver cooperation, modularity, constraint programming
\end{keywords}

\section{Introduction}

% Context
% Constraint programming aims at solving combinatorial problems expressed in a declarative way.
% It is applied to a large series of real-life problems, including vehicle routing problems, scheduling problems, and biological or musical problems~\cite{handbook-cp,truchet-constraint-2011}.
% The user specifies its problem in a mathematical formalism, and a solution is automatically found by a constraint solver.
% Therefore, the user does not need to program a resolution procedure that describes \textit{how} to solve the problem, and can focus on \textit{what} the problem is.

% Specialized context
A constraint solver is often more efficient when it targets at a specific constraint language, such as satisfiability (SAT) solvers with Boolean formulas, or  linear  programming solvers with linear arithmetic constraints.
However, problem specifications often consist of constraints of different types.
A real-life problem can contain two or more constraints such that one is more efficiently treated or it can only be treated in a solver, and the other one in another solver.
In such case, it is necessary to find a solver that supports all the constraints of the problem, but it may not be as efficient as specialized solvers.
Therefore, the cooperation among solvers becomes a central concern in order to achieve better efficiency and to improve expressiveness of the solvers.
Satisfiability modulo theories (SMT) solvers are probably the most well-known cooperation framework as they encapsulate constraint languages in theories that can be combined together by the Nelson-Oppen scheme~\cite{nelson-oppen-1979}.
Lazy clause generation~\cite{Ohrimenko:2009:PVL:1553323.1553342} is a more specialized example that mixes SAT solving and propagation-based constraint solvers, and currently it is the state of the art solver for many scheduling problems. %~\cite{schutt-rcpsp-max-lcg-2013}.
On the other end of the spectrum, the black box approaches study the combination of solvers without modifying them~\cite{bali-monfroy-1998}.
Overall, the combination of two or more solvers often results in a third solver with little consideration about the modularity and the reuse of its components and its cooperation scheme.

% Why is it interesting
We propose a theoretical and practical framework for constraint solving, where it is possible to introduce new solvers and cooperation schemes in a modular way.
In comparison to previous work, our cooperation schemes between solvers are not built in the framework itself, but defined at the same level as solvers.
It enables us to define various cooperation schemes among solvers, which can be run concurrently.

% How do you solve it
Our proposal is based on abstract interpretation~\cite{cousot-abstract-1977}, a framework to perform static analysis of programs.
Abstract domains are an important fragment of abstract interpretation.
They capture constraint languages as ordered structures.
Abstract interpretation has the advantage to cleanly separate between the logical formula (syntax), the abstract domain (semantics representable in a machine), and the concrete domain (mathematical semantics)---we introduce these concepts in Section~\ref{abstract-interpretation-cp}.
This separation and the order theory underlying abstract domains help to prove mathematical properties on the combination of abstract domains.
Moreover, abstract domains can be implemented almost directly as they describe the semantics of the solvers.
% \vspace{-0.2cm}
\paragraph{Contributions} This paper focuses on \textit{domain transformers}, which are functors constructing abstract domains from one or more abstract domains.
We propose two domain transformers capturing two cooperation schemes.
Firstly, the \textit{interval propagators completion} ($\mathsf{IPC}$) which equips any abstract domain with \textit{interval propagators} (Section~\ref{pp}).
An interval propagator is a function implementing an arithmetic constraint (linear or non-linear).
This completion can be applied to products of domains, which results in a cooperation scheme where two domains exchange bound constraints over their shared variables.
Secondly, we propose the \textit{delayed product} ($\mathsf{DP}$) which treats a constraint $c$ in an abstract domain $A_1$ until $c$ becomes treatable in a more efficient abstract domain $A_2$ (Section~\ref{delayed-product}).
This technique is inspired by the delayed goal technique of logic programming.
The delayed product dynamically rewrites a constraint once its variables are instantiated.
In addition, over-approximations of this constraint can be incrementally sent to $A_2$ before the variables of $c$ are fully instantiated.
Finally, we introduce the \textit{shared product} to combine domain transformers sharing abstract domains (Section~\ref{coop-scheme}).
This product enables the hierarchy of abstract domains and transformers to form a directed acyclic graph.
We illustrate these abstract domains over the flexible job shop scheduling problem in Section~\ref{impl-experiments}.
In particular, we reveal that several constraint solvers can be obtained by assembling the presented abstract domains, and that they are competitive with state of the art approaches.
\vspace{-0.2cm}
\paragraph{Related works}

Cooperation schemes between the domain of uninterpreted functions (Herbrand universe of a logic program) and various constraint systems have been widely studied in the context of \textit{constraint logic programming} (CLP).
For instance, \texttt{CLP(BNR)} deals with mixed continuous and discrete domains~\cite{older-programming-clp-bnr-1993}.
$\mathcal{TOY}$ is a functional CLP language that aims at the solvers cooperation among uninterpreted functions, arithmetic constraints over real numbers, and finite domains~\cite{toy-cooperation-2009}.
In particular, $\mathcal{TOY}$ introduces the notion of \textit{bridges}, such as \texttt{X \#==$_\mathit{int,real}$ Y} between two variables such that $X$ is an integer and $Y$ a real.
The transformer $\mathsf{IPC}$ can be seen as implementing generic bridges among its underlying domains.
A downside of CLP approaches is that the addition of a new constraint system or combination often corresponds to the design of a new language.

The SMT paradigm is an important field about theory combination at the logical level.
Theory and abstract domain are two sides of the same coin: theory captures the logical essence of a constraint language, while abstract domain captures its semantics.
In fact, it was shown that Nelson-Oppen combination is a specific reduced product, a technique to combine abstract domains, in abstract interpretation~\cite{cousot-theories-2012}.
Deeper connections have been made by the abstract conflict driven clause learning (ACDCL) framework~\cite{dsilva-abstract-2014} which demonstrates that SMT solvers can be considered as fixed point computation over abstract domains.
ACDCL is mostly a theoretical proposal and it has not been thoroughly investigated in practice.
Overall, cooperation schemes are either built in the theory or left aside in both SMT and ACDCL frameworks.

\section{Abstract interpretation for constraint programming}
\label{abstract-interpretation-cp}

Abstract interpretation is a framework to statically analyze programs by over-approximating the set of values that the variables of the program can take~\cite{cousot-abstract-1977}. % (see also~\cite{mine-tutorial-2017} for an introduction).
In a nutshell, the following diagram presents the fragment of abstract interpretation we are interested in:
\begin{center}
\begin{tikzpicture}
  \draw [thick,->] (-0.2,1.7) -- node[midway,above] {$\adi{}$} (-1.9,0.3);
  \draw [thick,->] (0.2,1.7) -- node[midway,above] {$\quad\cdi{}$} (1.9,0.3);
  \draw [thick,->] (-1.5,0.1) -- node[midway,above] {$\gamma$} (1.5,0.1);
  % \draw [thick,<-] (-1.5,-0.1) -- node[midway,below] {$\alpha$} (1.5,-0.1);

  \node at (0,2) {$\Phi$};
  \node at (-2,0) {$\ad{D}$};
  \node at (2,0) {$\cd{D}$};
\end{tikzpicture}
\end{center}
This diagram connects a logical formula, a concrete domain and an abstract domain.
The syntax of a program, specifically in our case, of a constraint problem is represented by the set $\Phi$ of any quantifier-free first-order logic formulas.
We interpret a formula $\varphi$ to a concrete or abstract domain respectively with $\cdi{\varphi}$ and $\adi{\varphi}$.
The concrete domain represents the mathematical semantics of this formula, its exact set of solutions which may be infinite and not computer-representable.
The abstract domain corresponds to the machine semantics of this formula that might under- or over-approximate the set of solutions of the concrete domain.
Approximations are particularly insightful on continuous domains, such as real numbers, which have to be approximated using floating point numbers.
An abstract domain is connected to the concrete domain by a concretization function $\gamma: \ad{D} \to \cd{D}$, which is useful to prove properties of the abstract domain\footnote{Abstract interpretation usually relies on an abstraction function $\alpha: \cd{D} \to \ad{D}$.
In our case, the concrete solutions set is fixed and always given by $\cdi{\varphi}$, thus we have $\alpha(\cdi{\varphi}) = \adi{\varphi}$.}.
In the following, as we mainly manipulate abstract domains, we will omit the $\sharp$ symbol on the operators, for instance $\adi{.}$ is written as $\llbracket . \rrbracket$.
This section summarizes previous work~\cite{pelleau-constraint-2013,ICTAI19-Talbot} in which more formal definitions and proofs can be found.

% We first present the concrete domain of constraint satisfaction problems, which is followed by the definition of abstract domains.
% The rest of this section overviews the box and octagon abstract domains and their completion to logic formula.
% We then define a constraint solver as a fixed point function over an abstract domain.

\paragraph{Concrete domain}

A constraint satisfaction problem (CSP) is a tuple $(X, D, C)$ where $X$ is a set of variables, $D = D_1 \times \ldots \times D_n$ the sets of values taken by each variable $x_i \in X$, and $C$ a set of relations over variables, called \textit{constraints}.
A constraint $c \in C$, defined on the variables $x_1,\ldots,x_n$ is satisfied when $c(v_1,\ldots,v_n)$ holds for all $v_i \in D_i$.
The \textit{concrete domain} is the powerset lattice $\cd{D} = \langle \mathcal{P}(D), \supseteq \rangle$ ordered by inclusion.
The concrete interpretation function maps a CSP $(X,D,C)$\footnote{Note that $(X,D,C)$ is just a structured presentation of a logical formula.} to an element in $\cd{D}$ representing its set of solutions:
\begin{displaymath}
\cdi{(X,D,C)} = \{(D'_1,\ldots,D'_n) \;|\; D'_i \subseteq D_i \text{ and all $c \in C$ satisfied}\}
\end{displaymath}
% We keep the terminology ``constraints'' when referring to terms of a logical formula.
% \vspace{-0.8cm}
\paragraph{Abstract domain}

In abstract interpretation, an abstract domain is a partially ordered set equipped with useful operations for programs analysis.
This notion has been adapted to constraint programming, where some operators are reused (\emph{e.g.}, join and interpretation function) and some are new (\emph{e.g.}, state and split) for its application to constraint solving.
In the following, ``abstract domain'' will refer to this modified notion of abstract domain for constraint programming.
The set $K = \{\ttrue,\ffalse,\unk\}$ represents elements of Kleene logic, in which we have $\ffalse \,\land\, \unk = \ffalse$ and $\ttrue \,\land\, \unk = \unk$.

\begin{definition}[Abstract domain]
An abstract domain for constraint programming is a lattice $\langle A, \leq \rangle$ where $A$ is a set of computer-representable elements equipped with the following operations:
\begin{itemize}
  \item $\bot$ is the smallest element and, if it exists, $\top$ the largest.
  \item $\sqcup: A \times A \to A$ is called the \textit{join}, it performs the union of the information contained in two elements.
  % Dually, we have $\sqcap: A \times A \to A$, called the \textit{meet}, and performing the intersection of the information.
  \item $\gamma: A \to \cd{D}$ is a monotonic \textit{concretization} function mapping an abstract element to its set of solutions.
  \item $state: A \to K$ gives the state of an element:
   $\ttrue$ if the element satisfies all the constraints of the abstract domain, $\ffalse$ if at least one constraint is not satisfied, and $\unk$ if satisfiability cannot be established yet.
  \item $\llbracket . \rrbracket : \Phi \to A$ is a partial function transferring a formula to an element of the abstract domain\footnote{Alternatively, this function could be total and every unsupported formula mapped to $\bot$ which is a correct over-approximation.
  However, it prevents us from distinguishing between tautological formulas (since $\llbracket \ttrue \rrbracket = \bot$) and unsupported formulas.
  In the first case, we wish to interpret the formula in $A$, while in the second case we prefer to look for another, more suitable, abstract domain.}.
  This function is not necessarily defined for all formulas since an abstract domain efficiently handles a delimited constraint language.
  \item $\mathit{closure}: A \to A$ is an extensive function ($\forall x, x \leq \mathit{closure}(x)$) which eliminates inconsistent values from the abstract domain.
  % In constraint programming, $closure$ is known as propagation.
  \item $\mathit{split}: A \to \mathcal{P}(A)$ divides an element of an abstract domain into a finite set of sub-elements.
\end{itemize}
\label{abstract-dom}
\end{definition}
\noindent
We refer to the ordering of the lattice $L$ as $\leq_L$ and similarly for any operation defined on $L$, unless no confusion is possible.
An abstract element $a \in A$ under-approximates the concrete solutions set of a formula $\varphi$ if $\gamma(a) \subseteq \cdi{\varphi}$, which implies that all points in $a$ are solutions, but solutions might be missing.
Dually, $a$ over-approximates $\varphi$ if $\gamma(a) \supseteq \cdi{\varphi}$, which implies that all solutions are preserved but there might be non-solution points in $a$.
We can prove properties on abstract domains by verifying these two equations.

We present an algorithm to refine the approximation of an element $\llbracket \varphi \rrbracket \in A$ approximating a formula $\varphi$.
This algorithm is generic over an abstract domain $A$.
\begin{algorithmic}[1]
\Function{$\mathtt{solve}$}{$a \in A$}
\State $a \leftarrow \mathtt{closure}(a)$
\If{$\mathtt{state}(a) = \mathtt{true}$} \Return $\{a\}$
\ElsIf{$\mathtt{state}(a) = \mathtt{false}$} \Return $\{\}$
\Else
  \State $\langle a_1,\ldots, a_n \rangle \leftarrow \mathtt{split}(a)$
  \State \Return $\bigcup_{i=0}^{n} \mathtt{solve}(a_i)$
\EndIf
\EndFunction
\end{algorithmic}
\noindent
This algorithm follows the usual solving pattern in constraint programming which is \textit{propagate and search}.
We infer as much information as possible with $\mathit{closure}$, and then divide the problem into sub-problems with $\mathit{split}$.
We rely on $\mathit{state}$ for the base cases defined when we reach a solution or an inconsistent node.
We obtain the solutions of a constraint set $C$ in an abstract domain $A$ with $\mathit{solve}(\bigsqcup_{c \in C} \llbracket c \rrbracket)$.
It is noteworthy that the abstract domain $a \in A$ can be a composition of several abstract domains through domain transformers (see Section~\ref{domain-transformers}).
The over-approximation property extends to $\mathtt{solve}$ with $\bigcup\{\gamma(a) \;|\; a \in \mathtt{solve}(\llbracket \varphi \rrbracket)\}\supseteq \cdi{\varphi}$, and dually for under-approximation.
A termination condition and proof of this algorithm are given in~\cite{ICTAI19-Talbot}.
We illustrate these definitions in some abstract domains as follows.

\paragraph{Box abstract domain}

We denote by $\mathbb{A}$ the set of integers $\mathbb{Z} \cup \{-\infty,\infty\}$.
An interval is a pair $(l,u) \in \mathbb{A}^2$ of the lower and upper bounds, written as $[l..u]$, defined as $\gamma([l..u]) = \{ x \in \mathbb{Z} \;|\; l \leq x \leq u\}$.
The set of intervals $I = \langle \{[l..u] \;|\; \forall{l,u} \in \mathbb{A}\}, \leq, \bot,\top,\sqcup \rangle$ is a lattice ordered by set inclusion $\leq \eqdef \supseteq$.
It has a bottom element $\bot \eqdef [-\infty,\infty]$, a top element $\top \eqdef \{\}$, and a join $\sqcup \eqdef \cap$ defined by set intersection.
An interval can be used to represent the domain of a single variable.
In order to represent collection of variable's domains, we consider the lattice of partial functions $\pf{V}{I}$ from the set of variable's names $V$ to the lattice of intervals $I$.
Practically, elements of $\pf{V}{I}$ can be thought as arrays of interval domains.
This lattice is studied by~\citet{fernandez-interval-2004} for constraint solving in a more general setting.
The \textit{box abstract domain} $\mathcal{B} = \langle \pf{V}{I}, \leq \rangle$ equips $\pf{V}{I}$ with the operators of Def.~\ref{abstract-dom}.
Boxes capture a small constraint language consisting of the constraints $x \leq b$, $x \geq b$, $x < b$, $x > b$ and $x = b$, where $x \in V$, $b \in \mathbb{A}$.
The role of the interpretation function is then to map each supported constraint to an element of the abstract domain.
The logical conjunction coincides with the join in the lattice.
For instance we have $B = \llbracket x > 2 \land x \leq 4 \land y > 0 \rrbracket = \llbracket x > 2 \rrbracket \sqcup \llbracket x \leq 4 \rrbracket \sqcup \llbracket y > 0 \rrbracket = \{x \mapsto [3..\infty]\} \sqcup \{x \mapsto [-\infty..4]\} \sqcup \{y \mapsto [1..\infty]\} = \{x \mapsto [3..4], y \mapsto [1..\infty]\}$.
The concrete set of elements is obtained by listing all solutions, \emph{e.g.}, $\gamma(B) = \{\{(x,3),(y,1)\},\{(x,4),(y,1)\},\{(x,3),(y,2)\},\ldots\}$.
We observe that this set is infinite, which is why we need an abstract domain approximating this set in a finite way.
In the case of boxes, the interpretation function is both an under- and over-approximation because, for all formulas $\varphi$ such that $\llbracket \varphi \rrbracket$ is defined, we have $\gamma(\llbracket \varphi \rrbracket) \subseteq \cdi{\varphi}$ and $\gamma(\llbracket \varphi \rrbracket) \supseteq \cdi{\varphi}$.
Therefore, once a constraint has been interpreted, we have the best possible approximation, and thus $\mathit{closure}$ is simply the identity function.
It is not always the case, as explained below with octagons.
An element $b \in \mathcal{B}$ is consistent if $\gamma(a) \neq \{\}$, which boils down to the observation that no variable has an empty interval.
This observation can be used to define $\mathit{state}$, which is always equal to either $\ttrue$ or $\ffalse$.
The $\mathit{split}$ operator can only be useful when boxes are used in combination with other abstract domains.
It can be defined by selecting a variable $x = [l..u]$ in $b \in \mathcal{B}$ and dividing this interval into two parts, \emph{e.g.},~$\mathit{split}(b) = \{b \sqcup \llbracket x = l \rrbracket, b \sqcup \llbracket x > l \rrbracket\}$.
It is worth mentioning that many different $\mathit{split}$ operators are possible, which are more or less efficient depending on the problem at hand.

\paragraph{Octagon abstract domain}

The octagon abstract domain~\cite{mine-octagon-2006}, denoted by $\mathsf{O}$, is more expressive than boxes because it can interpret constraints of the form $\pm x \pm y \leq c$ and $\pm x \leq c$ where $x,y$ are variables and $c$ is a constant (either over integers, floating point numbers or rational numbers).
Internally, an octagon is represented by a difference-bound matrix of size $\mathcal{O}(n^2)$ where $n$ is the number of variables.
Its closure operator is the Floyd-Warshall algorithm which runs in $\mathcal{O}(n^3)$ in the general case.
An incremental version in $\mathcal{O}(n^2)$ is available when only one constraint is added.
% It is sometimes computationally cheaper to first interpret all constraints in a bulk and then perform the closure, rather than performing the closure immediately when interpreting\footnote{Consider two successive constraints $x + y \leq 1$ and $x + y \leq 0$ where the second one subsumes the first.
% The first closure would be unnecessary as it is only useful to take into account the second constraint.}.
% This is the reason why we separated the interpretation from the closure operator.
% A property of octagons is that $\mathit{closure}$ always maps an element to a consistent or inconsistent element, i.e.~$\mathit{state}(\mathit{closure}(o))$ is either equal to $\ttrue$ or $\ffalse$.
% Hence, the algorithm $\mathtt{solve}$ only performs a single step.

\paragraph{A domain transformer: logic completion}

The logic completion $\mathsf{L}(A)$ is a domain transformer: it takes an abstract domain $A$ as a parameter and produces a new abstract domain supporting logical connectors over the constraint language of $A$.
For example, the formula $c_1 \eqdef (x = 1 \lor x = 2)$ is neither interpretable in boxes nor in octagons, but it is in $\mathsf{L}(\mathcal{B})$ or $\mathsf{L}(\mathsf{O})$.
In the presence of disjunction, we have $\mathit{state}(\mathit{closure}(\llbracket c_1 \rrbracket)) = \unk$, because there is too few information to infer whether $x = 1$ or $x = 2$.
A choice must be made, and this is where the $\mathit{split}$ operator and the $\mathtt{solve}$ algorithm become necessary.
The problem is decomposed into two subproblems $x = 1$ and $x = 2$ which are solved in turn.
The union of their solutions is the solutions set of the initial problem.

\paragraph{Combination of domains: direct product}

The value of this abstract framework stands out when abstract domains are combined.
For instance, consider the formula $c_2 \eqdef (x > 4 \land x < 7) \Rightarrow y + z \leq 4$.
The abstract domain $\mathsf{L}(\mathsf{O})$ is expressive enough to interpret $c_2$.
However, the constraints on $x$ can be treated more efficiently in boxes than in octagons due to the lower space complexity of boxes.
Therefore, it is advantageous to interpret $x > 4 \land x < 7$ in a box and $y + z \leq 4$ in an octagon.
In order to achieve that, we rely on the direct product $\mathcal{B} \times \mathsf{O}$.
When stacked with the logic completion transformer, it gives us $\mathsf{L}(\mathcal{B} \times \mathsf{O})$.
We define the direct product as follows.
\newpage
\begin{definition}[Direct product]
Let $A_1,\ldots,A_n$ be a collection of $n$ abstract domains.
The direct product is an abstract domain $\langle A_1 \times \ldots \times A_n, \leq \rangle$ where each operator is defined coordinatewise, \emph{e.g.}, $(a_1,\ldots,a_n) \leq (b_1,\ldots,b_n) \Leftrightarrow \bigwedge_{1 \leq i \leq n} a_i \leq_i b_i$, with $(a_1,\ldots,a_n), (b_1,\ldots,b_n) \in A_1 \times \ldots \times A_n$.
% \begin{itemize}
%   % \item $(a_1,\ldots,a_n) \sqcup (b_1,\ldots,b_n) \eqdef (a_1 \sqcup_1 b_1,\ldots,a_n \sqcup_n b_n)$.
%   % \item $\bot \eqdef (\bot_1,\ldots,\bot_n)$ and $\top \eqdef (\top_1,\ldots,\top_n)$.
%   \item $\gamma((a_1,\ldots,a_n)) \eqdef \bigcup_{1 \leq i \leq n} \gamma_i(a_i)$
%   \item $\llbracket c \rrbracket \eqdef (a_1,\ldots,a_n)$ where $a_i = \llbracket c \rrbracket_i$ if defined and otherwise $\bot_i$.
%   % \item $\mathit{closure}((a_1,\ldots,a_n)) \eqdef (\mathit{closure}_1(a_1),\ldots,\mathit{closure}_n(a_n))$.
%   \item $\mathit{state}((a_1,\ldots,a_n)) \eqdef \bigwedge_{1 \leq i \leq n} \mathit{state}_i(a_i)$.
%   \item $\mathit{split}((a_1,\ldots,a_n)) \eqdef \{(a_1,\ldots,b_i,\ldots,a_n)\,|\, b_i \in \mathit{split}_i(a_i), |b_i| > 1\}$
% \end{itemize}
\end{definition}

There is a small issue about the previous formula $c_2$: the constraint $x > 4 \land x < 7$ is interpretable both in boxes and octagons.
In this case, the direct product will interpret this formula in both domains, which is not the behavior we expect.
To solve this problem, we annotate formulas with an integer, denoted as \texttt{$\varphi$:i}, meaning that $\varphi$ should be interpreted in the $i^\text{th}$ component of the product.
Formally, we have $\llbracket \varphi\mathtt{:i} \rrbracket \eqdef (\bot_1,\ldots,\llbracket \varphi \rrbracket_i, \ldots, \bot_n)$.
The formula can be duplicated as many times as needed to be interpreted in more than one domain.
If we annotate $c_2$ with $(x > 4 \land x < 7)\mathtt{:1} \Rightarrow (y + z \leq 4)\mathtt{:2}$, the first constraint will be interpreted in boxes and the second one in octagons---note that the logic completion forwards the interpretation of annotated sub-formulas to the underlying domain, here the product.

The cooperation happening between the box and octagon domains in $\mathsf{L}(\mathcal{B} \times \mathsf{O})$ is fully logical.
This form of cooperation allows us to address some complex problems, as shown in~\cite{ICTAI19-Talbot}.
However, as soon as two constraints belonging to different abstract domains share variables, the variable's domains (\emph{e.g.}, intervals) are not shared among the domains.
Indeed, the operators of the direct product defined coordinatewise, each $\mathit{closure}_i$ is independently applied to each component of the product, but the new information obtained is never exchanged.
In the next section, we propose two domain transformers that exchange information between domains in two different ways.
As a cross-product, we show that domain transformers also capture operational aspects (such as delayed goals), that are more difficult to express in a fully logical setting.

\section{Domain transformers for cooperation schemes}
\label{domain-transformers}

% To illustrate that the shared product meets our modularity and compositionality requirements, we present two cooperation schemes that are defined as domain transformers.
% As a cross-product, we show that domain transformers also capture operational aspects (such as delayed goals), that are more difficult to express in a purely logical setting.
% \vspace{-0.3cm}
\subsection{Interval propagators completion}
\label{pp}

Consider the constraint $c_3 \eqdef x > 1 \land x + y + z \leq 5 \land y - z \leq 3$.
The constraint $x > 1$ can be interpreted in boxes and $y - z \leq 3$ in octagons, but $x + y + z \leq 5$ is too general to be interpreted in any abstract domain we introduced until now.
Moreover, the last constraint shares a variable with the other two.
The interval propagators completion is a domain transformer, denoted as $\mathsf{IPC}(A)$, which solves both problems at once.
$\mathsf{IPC}(A)$ extends the constraint language of any abstract domain $A$ to arbitrary arithmetic constraints.
The constraint $(x > 1)\mathtt{:1} \land x + y + z \leq 5 \land (y - z \leq 3)\mathtt{:2}$ can be fully interpreted in $\mathsf{IPC}(\mathcal{B} \times \mathsf{O})$.
To understand how $\mathsf{IPC}$ proceeds, we must first introduce two new concepts: the projection function and propagators.

$\mathsf{IPC}(A)$ expects $A$ to provide an additional projection function of the variables onto intervals, defined as $\mathit{project}: (A \times \vars) \to I$.
The function $\mathit{project}(a,x)$ must over-approximate the set of solutions of $x$ in $a$, \emph{i.e.}, for each value $v$ that takes $x$ in $\gamma_A(a)$, $v \in \gamma_I(\mathit{project}(a,x))$.
The interval lattice might be defined over rational numbers $\mathbb{Q}$, floating point numbers $\mathbb{F}$ or integers $\mathbb{Z}$ depending on $A$.
Projection can be implemented directly in many arithmetic domains such as boxes and octagons, but it is sometimes more difficult as it is the case in polyhedra.
In the case of the direct product, projection is defined as $\mathit{project}((a_1,\ldots,a_n), x) = \mathit{project}_1(a_1, x) \sqcup \ldots \sqcup \mathit{project}_n(a_n, x)$.
If the variable $x$ does not belong to an abstract domain, $\mathit{project}_i(a_i,x)$ maps to $\bot$.
The projection is defined on integer intervals if any of the underlying abstract domains projects $x$ onto integers---since integers are more constrained than other types.
In the cases of $\mathbb{F}$ and $\mathbb{Q}$, rational numbers are preferred as they are more precise.

A propagator on an abstract domain $A$ is an extensive function $p: A \to A$ implementing an inference algorithm for a given constraint.
The $\mathit{closure}$ operator of any abstract domain $A$ can be viewed as a propagator on $A$.
The difference is that a propagator implements a single constraint whereas abstract domains support a larger constraint language.
To illustrate propagators, we consider a propagator for the constraint $x \geq y$ generically on an abstract domain $A$ with a projection function.
\begin{displaymath}
\llbracket x \geq y \rrbracket = p_\geq = \lambda a.a \sqcup_A \llbracket x \geq y_\ell \rrbracket_A \sqcup_A \llbracket y \leq x_u \rrbracket_A
\end{displaymath}
with $\mathit{project}(a, x) = [x_\ell..x_u] \text{ and } \mathit{project}(a, y) = [y_\ell..y_u]$.
For instance, given $\mathit{project}(a, x) = [1..2] \text{ and } \mathit{project}(a, y) = [2..3]$, and the constraint $x \geq y$, we obtain $\mathit{project}(p_\geq(a), x) = [2..2] \text{ and } \mathit{project}(p_\geq(a), y) = [2..2]$.
We notice that this propagation step is extensive since we have $a \leq p_{\geq}(a)$.
The constraint $x + y + z \leq 5$ can be implemented by a similar propagator.
The propagation performed on $x$, $y$ and $z$ will be automatically communicated to the direct product $\mathcal{B} \times \mathsf{O}$, which in turn will communicate the new bounds to the box and octagon components.
However, we must solve a small technical issue.
In the constraint $c_3$, we do not wish to propagate new bounds on $x$ in octagons since there is no octagonal constraint involving $x$.
In a propagator defined similarly to $p_\geq$, the variable $x$ would be added in the octagon.
We overcome this issue with a function $\mathit{embed}: A \times A \to A$ defined as $\mathit{embed}(a_1, a_2) = a_1 \sqcup a_2$ if $\mathit{vars}(a_2) \subseteq \mathit{vars}(a_1)$\footnote{The function $\mathit{vars}: A \to \mathcal{P}(\vars)$ can be generically added to any abstract domain by capturing the variables of a formula $\varphi$ before it is interpreted into an element of $A$.}, and $\mathit{embed}(a_1, a_2) = a_1$ otherwise.
We define this function coordinatewise on the direct product.
The corrected version of the propagator $p_\geq$ is given as follows:
\begin{displaymath}
\llbracket x \geq y \rrbracket = p_\geq = \lambda a.\mathit{embed}_A(a, \llbracket x \geq y_\ell \rrbracket_A) \sqcup_A \mathit{embed}_A(a, \llbracket y \leq x_u \rrbracket_A)
\end{displaymath}

Besides extensiveness, we usually require a propagator to over-approximate the set of solutions (soundness), \emph{i.e.}, it should not remove solutions of the logical constraint, in order to guarantee the correctness of the solving algorithm, formally $\gamma(p(a)) \supseteq \cdi{\varphi}$.
Finally, we associate to each propagator $p$ a $state_p$ function which is defined similarly to the one of abstract domain.
In particular, an element $a$ is a solution of $p$ if $state_p(a) = \ttrue$.

Putting all the pieces together, we obtain the lattice $\mathit{Pr} = \langle \mathcal{P}(\mathit{Prop}), \subseteq \rangle$ where $\mathit{Prop}$ is the set of all propagators (extensive and sound functions).
The \textit{interval propagators completion} of an abstract domain $A$ with projection is given by the Cartesian product $\mathsf{IPC}(A) = \langle A \times \mathit{Pr}, \leq \rangle$ with its operations defined as follows for $(a, P),(a',P') \in \mathsf{IPC}(A)$:
\begin{itemize}
\item $(a,P) \leq (a',P') \Leftrightarrow a \leq_A a' \land P \subseteq P' \qquad\quad (a,P) \sqcup (a',P') \eqdef (a \sqcup_A a', P \cup P')$.
% \item $\pi((a, P)) \eqdef (a)$ and $\rho((a, P), (b))\eqdef (a \sqcup_A b, P)$.
\item $state((a,P)) \eqdef state_A(a) \;\land\;  \bigwedge_{p \in P}~state_p(a)$ which means that we reach a solution when $a$ is a solution for all propagators in $P$.
\item $\gamma((a,P)) \eqdef \bigcup\{ \gamma_A(a') \,|\, a' \geq_A a \land state((a', P))=\ttrue\}$.
\item The function $\llbracket c \rrbracket$ associates the constraint $c$ to its propagator $p_c$ and state function $state_p$.
For example, we can rely on the propagation algorithm \texttt{HC4}~\cite{ilog-revising-1999} which works generically over arbitrary arithmetic constraints.
\item $closure((a, \{p_1,\ldots,p_n\})) \eqdef (\mathbf{fp} (p_1 \circ \ldots \circ p_n)(a), \{p_1,\ldots,p_n\})$.
\item $split((a,P)) \eqdef \{(a', P) \;|\; a' \in split_A(a) \}$.
% \item $split((a,P)) \eqdef \{(a, P)\}$.
\end{itemize}
\noindent
The \textit{propagation step} is realized by computing a fixed point ($\mathbf{fp}$) of $p_1 \ldots p_n$ altogether in $\mathit{closure}$.
We do not require to compute the \textit{least} fixed point as it has no impact on the termination property of the solving algorithm.
There are many possible implementations of $closure$ as shown in~\cite{apt-essence-1999}.
% We propose an event-based propagation algorithm that generalize this propagation step over multiple abstract domains in Section~\ref{event-ad}.
The next lemma explains that $\mathsf{IPC}$ over a direct product of abstract domains results in a sound over-approximation.
% The proof of all lemmas is found in~\ref{appendix-proofs}.

\begin{lemma}
Let $A_1$ and $A_2$ be abstract domains, and $\varphi$ a logic formula.
If $A_1$ and $A_2$ over-approximate the set of solutions $\cdi{\varphi}$, then $\mathsf{IPC}(A_1 \times A_2)$ also over-approximates $\cdi{\varphi}$.
\label{ipc-lemma}
\end{lemma}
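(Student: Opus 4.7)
The proof naturally decomposes along the two layers of $\mathsf{IPC}(A_1 \times A_2)$: I will first verify that the direct product of two over-approximating abstract domains is itself an over-approximation, then that wrapping the result in $\mathsf{IPC}$ preserves this property. The core of the argument reduces to (i) intersecting over-approximations remains an over-approximation, and (ii) propagators are sound by construction.

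For the first layer, all operators of $A_1 \times A_2$ act coordinatewise, and the natural concretization of a pair is $\gamma((a_1, a_2)) = \gamma_1(a_1) \cap \gamma_2(a_2)$. Taking $(a_1, a_2) = \llbracket \varphi \rrbracket_{A_1 \times A_2}$, the hypothesis $\gamma_i(a_i) \supseteq \cdi{\varphi}$ for $i = 1, 2$ immediately gives $\gamma((a_1, a_2)) \supseteq \cdi{\varphi}$.

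For the second layer, let $B = A_1 \times A_2$ and $(a, P) = \llbracket \varphi \rrbracket_{\mathsf{IPC}(B)}$, where $a$ is produced by interpreting the fragment of $\varphi$ that $B$ can handle directly and $P$ collects the propagators synthesized for the remaining arithmetic conjuncts. The concretization
\[
\gamma((a, P)) \;=\; \bigcup\{\gamma_B(a') \mid a' \geq_B a \text{ and } \mathit{state}((a', P)) = \ttrue\}
\]
is filter-style, so I must exhibit, for each $s \in \cdi{\varphi}$, a witness $a' \geq_B a$ with $s \in \gamma_B(a')$ and $\mathit{state}((a', P)) = \ttrue$. The natural choice is $a' = a \sqcup_B \llbracket \bigwedge_{x} x = s(x) \rrbracket_B$, \emph{i.e.}, the refinement of $a$ with the singleton abstraction of $s$. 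Monotonicity of $\gamma_B$ gives $s \in \gamma_B(a')$; since $s$ satisfies every conjunct of $\varphi$, we have $\mathit{state}_B(a') = \ttrue$, and by soundness of each $p \in P$, $\mathit{state}_p(a') = \ttrue$.

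The main obstacle is justifying the existence of the witness $a'$: it depends on $B$ being expressive enough to interpret the singleton $\{s\}$ (routine for the base domains considered in the paper) and, more critically, on the soundness of propagators, which is part of the definition of $\mathit{Prop}$ but must be invoked explicitly rather than taken for granted. Once these ingredients are in place, the remainder reduces to a short monotonicity chase in the underlying lattices.
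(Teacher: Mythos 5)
Your argument never engages with the part of the construction that the lemma is actually about. In $\mathsf{IPC}(A_1 \times A_2)$ the cooperation happens inside $\mathit{closure}$: each propagator reads the bounds of a shared variable through $\mathit{project}_i$ and writes them back into the other component by interpreting the corresponding bound constraints $\llbracket v \geq l \land v \leq u \rrbracket$ there. The paper's proof consists precisely of the observation that $\mathit{project}_i$ over-approximates the values a variable can take and that the re-interpreted interval constraints over-approximate again, so that the fixed point of the propagators never discards a point of $\cdi{\varphi}$. Your two ``layers'' only cover the interpretation $\llbracket \varphi \rrbracket$ and the filter-style $\gamma$ of $\mathsf{IPC}$; they say nothing about what happens once the propagators in $P$ are executed and bounds are exchanged between $A_1$ and $A_2$, which is exactly where a solution could be lost. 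Note also that your witness $a' = a \sqcup_B \llbracket \bigwedge_x x = s(x) \rrbracket_B$ is not necessarily above $\mathit{closure}(a)$, so even under your reading of the statement the argument does not extend to the closed element without first proving that propagation keeps $s$ in the concretization---which is the content of the paper's proof.

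Independently, the witness construction has two internal gaps. First, monotonicity of $\gamma_B$ gives $\gamma_B(a') \subseteq \gamma_B(a)$, i.e., the wrong direction: to conclude $s \in \gamma_B(a \sqcup_B b)$ from $s \in \gamma_B(a)$ and $s \in \gamma_B(b)$ you need the join not to lose solutions, $\gamma_B(a \sqcup_B b) \supseteq \gamma_B(a) \cap \gamma_B(b)$, which is not required by Definition~\ref{abstract-dom} (it holds for boxes and octagons, but the lemma quantifies over arbitrary $A_1$, $A_2$, and your proof must either assume it explicitly or restrict the statement). Second, $\mathit{state}_p(a') = \ttrue$ does not follow from soundness of $p$: soundness only says that applying $p$ removes no solutions; what you need is a completeness property of $\mathit{state}_p$, namely that on an (essentially) instantiated element all of whose points satisfy the constraint it answers $\ttrue$ rather than $\unk$---otherwise the filter-style $\gamma$ may be empty and the over-approximation claim fails. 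So the proposal both misses the key idea of the paper's proof (soundness of the bound exchange via projection and re-interpretation) and rests on properties of $\sqcup$ and the state functions that are not part of the abstract-domain interface as defined.
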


\begin{proof}
Let $\bar{i} = 3-i$.
We know that $\mathit{project}_i$ maps to an over-approximated interval of its variables.
This interval view is transferred into $A_{\bar{i}}$ via $\llbracket v \geq l_i \land v \leq u_i \rrbracket_{\bar{i}}$ which over-approximates the constraints as well.
Therefore, only over-approximations are involved during the information exchange and no solution is lost.
\end{proof}

% The abstract domain $\mathsf{L}(\mathsf{IPC}(\mathcal{B}))$ forms the basis of many modern constraint solvers such as \textsf{GeCode}~\cite{gecode} or \textsf{Choco}~\cite{choco}.
% % However, other transformer such as the event-based transformer, are needed to model more precisely these solvers.
% Importantly, the domain transformer $\mathsf{IPC}$ is not specialized to box, but works over any abstract domain providing an interval-based projection function such as octagon or polyhedron.

% We could have a shared product of two octagons $\mathsf{O}_1 \sp \mathsf{O}_2$ if, for instance, two disjoint sets of octagonal variables and constraints could be extracted for better efficiency.
% Hence, our framework also supports offline decomposition techniques such as presented in~\cite{blanchet-static-2003} to efficiently decompose a model into several octagons\footnote{Online decomposition such as in~\cite{singh-making-2015} is also supported if encapsulated inside a single abstract domain.}.

\subsection{Delayed product}
\label{delayed-product}

$\mathsf{IPC}$ is only able to exchange bound constraints although there are often opportunities for stronger cooperation between domains.
We present the \textit{delayed product}, a product inspired by delayed goals in logic programming, that dynamically exchanges specialized constraints between two domains.
We consider again the constraint $x + y + z \leq 5$ in the formula $c_3$.
Whenever the variable $x$ becomes instantiated, meaning that $x = v$ for a value $v$, we can rewrite the constraint to $y + z \leq 5 - v$ and interpret it in octagons for additional propagation.

Let $A_1$ and $A_2$ be abstract domains such that $A_1$ is strictly more expressive\footnote{The constraint language supported by the interpretation function of $A_2$ is included in $A_1$.} than $A_2$, but $A_2$ is supposed to be more efficient on its constraints language.
The delayed product $\mathsf{DP}(A_1, A_2)$ evaluates a set of formulas $F \subseteq \Phi$ into $A_1$ until they become instantiated enough to be supported in $A_2$.
A variable $x$ is instantiated in $a \in A$ whenever $fix(a,x) \eqdef (x_\ell = x_u)$, with $project(a,x) = [x_\ell..x_u]$, holds.
For readability, we write $val(a,x) = v$ with $v$ the value of $x$ in $a$ whenever $fix(a,x)$ holds.
To describe this product, we rely on a rewriting function that replaces every instantiated variable with its value, formally defined as:
\begin{displaymath}
\begin{array}{l}
\varphi \to_a \left\{
\begin{array}{ll}
\varphi[x \to val(a,x)] & \text{ if } \exists{x \in vars(\varphi)},~fix(a,x) \\
\varphi & \text{ otherwise}
\end{array} \right.
\end{array}
\end{displaymath}
\noindent
A formula to be transferred is an element of the lattice $\mathsf{FT}=\pf{\Phi}{\mathit{Bool}}$ where $\mathit{Bool}=\{\ttrue,\ffalse\}$ and $\ffalse \leq \ttrue$.
Let $f \in \mathsf{FT}$, then $f(\varphi)$ is $\ttrue$ if the formula has already been transferred, and $\ffalse$ otherwise.
We write $\mathit{nt}(f) = \{\varphi \;|\; f(\varphi)=\ffalse\}$ the set of non-transferred formulas.
The delayed product $\mathsf{DP}(A_1, A_2) = \langle A_1 \times A_2 \times \mathsf{FT}, \leq \rangle$ is an abstract domain inheriting most operations from the Cartesian product.
The different operations are defined as follows:
\begin{itemize}
% \item $\pi((a_1,a_2,c)) \eqdef (a_1,a_2)$ and $\rho((a_1,a_2,c), (b_1,b_2))\eqdef (a_1 \sqcup_1 b_1, a_2 \sqcup_2 b_2, c)$.
\item $\llbracket \varphi \rrbracket \eqdef \left\{
\begin{array}{ll}
(\bot_1, \llbracket \varphi \rrbracket_2, \{\}) & \text{ if } \llbracket \varphi \rrbracket_2 \text{ is defined } \\
(\llbracket \varphi \rrbracket_1, \bot_2, \{\varphi \mapsto \ffalse \}) & \text{ otherwise}
\end{array} \right.$
% \item $\mathit{state}((a_1,a_2,c)) \eqdef \mathit{state_1}(a_1) \land \mathit{state_2}(a_2)$.
\item $\mathit{closure}((a_1,a_2,c)) \eqdef (a_1,a_2,c) \sqcup \bigsqcup_{\varphi \in \mathit{nt}(C)} \mathit{closure\_one}(a_1, a_2, \varphi)$ \\
$\text{with }\mathit{closure\_one}(a_1, a_2, \varphi) \eqdef \left\{
\begin{array}{l}
(a_1,a_2 \sqcup_2 \llbracket \varphi' \rrbracket_2, \{\varphi \mapsto \ttrue\}) \text{ where } \varphi \to^{*}_{a_1} \varphi' \\
  \qquad \text{ if } \llbracket \varphi' \rrbracket_2 \text{ is defined and } vars(\varphi') \subseteq vars(a_2) \\
(a_1,a_2,\{\varphi \mapsto \ffalse\}) \\
\end{array}\right.$
% \item $\mathit{split}((a_1,a_2,c)) = \{(a_1,a_2,c)\}$
\end{itemize}
\noindent
The condition $vars(\varphi') \subseteq vars(a_2)$ in $\mathit{closure\_one}$ restricts the product to add a constraint only if the variables of the constraint are already defined in the domain.
It enables the user of the domain to decide with better flexibility which variables need to be instantiated before the constraint is transferred.
% \vspace{-0.3cm}
\paragraph{Improved closure}
By over-approximating a constraint $c$, it is possible to interpret it in $A_2$ even before it becomes instantiated enough.
For instance, the constraint $x + y + z \leq 5$ can be over-approximated to $y + z \leq 5 - x_\ell$ with $\mathit{project}(a_1, x) = [x_\ell..x_u]$ since the minimal value that $x$ can ever take is its lower bound.
Let $x$ be a variable in $a_1 \in A_1$, $e$ an arithmetic expression, and $project(a_1,x) = [x_\ell..x_u]$.
We rely on the following rewriting function $\twoheadrightarrow$:
\begin{displaymath}
x \leq e \twoheadrightarrow_{a_1} x_\ell \leq e \qquad\qquad
x \geq e \twoheadrightarrow_{a_1} x_u \geq e
\end{displaymath}

\begin{lemma}
The function $\twoheadrightarrow$ over-approximates the constraints $x \leq e$ and $x \geq e$.
\label{dp-lemma}
\end{lemma}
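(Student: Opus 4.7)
The plan is to check the two rewrites one at a time, by showing that any concrete point consistent with $a_1$ that satisfies the left-hand side of $\twoheadrightarrow_{a_1}$ also satisfies the right-hand side; this is the standard way to prove over-approximation relative to the context $a_1$. The key lever is the specification of $\mathit{project}$ recalled in Section~\ref{pp}: for $\mathit{project}(a_1,x) = [x_\ell..x_u]$, every value $v$ taken by $x$ in $\gamma_{A_1}(a_1)$ satisfies $x_\ell \leq v \leq x_u$.

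First I would treat $x \leq e \twoheadrightarrow_{a_1} x_\ell \leq e$. Fix any concrete point in $\gamma_{A_1}(a_1)$ satisfying $x \leq e$, and let $v$ be the value of $x$ in this point. By the projection property, $x_\ell \leq v$, and by assumption $v \leq e$ (in the valuation of $e$ at this point), hence $x_\ell \leq e$. Thus the set of solutions of $x \leq e$ within $a_1$ is contained in the set of solutions of $x_\ell \leq e$ within $a_1$, which is exactly the over-approximation property. The case $x \geq e \twoheadrightarrow_{a_1} x_u \geq e$ is symmetric: from $v \leq x_u$ and $v \geq e$ we derive $x_u \geq e$.

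The only subtlety worth spelling out is that the inequality $x_\ell \leq e$ (respectively $x_u \geq e$) no longer mentions $x$, so its meaning has to be read relative to $a_1$: the bound $x_\ell$ is a constant provided by the projection of $a_1$, and the expression $e$ is evaluated at the same concrete point as before. Once that is made explicit, the argument is purely arithmetic and there is no real obstacle. I would phrase the proof as two short, parallel paragraphs so that the reader immediately sees the symmetry between the lower-bound rewrite (used for $\leq$) and the upper-bound rewrite (used for $\geq$).
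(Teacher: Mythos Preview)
Your proposal is correct and follows essentially the same approach as the paper: the paper's proof is the one-line observation that for any value $v$ of $x$, $v\leq e$ together with $x_\ell \leq v$ gives $x_\ell \leq v \leq e$, and dually for $\geq$. Your version simply spells out the reference to the projection specification and the treatment of $e$ more carefully, but the argument is identical.
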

\begin{proof}
For any value $v$ of $x$, if $v \leq e$ is entailed, then $l \leq e$ is also entailed since $l \leq v \leq e$ (similarly for $x \geq e$).
\end{proof}
We extend the definition of closure to take into account these over-approximations:

\begin{displaymath}
\mathit{closure\_one}(a_1, a_2, \varphi) \eqdef \left\{
\begin{array}{l}
(a_1,a_2 \sqcup_2 \llbracket \varphi' \rrbracket_2, \{\varphi \mapsto \ttrue\}) \text{ where } \varphi \leadsto_{a_1} \varphi' \\
\qquad \text{ if } \llbracket \varphi' \rrbracket_2 \text{ is defined and } vars(\varphi') \subseteq vars(a_2) \\
(a_1,a_2 \sqcup_2 \llbracket \varphi' \rrbracket_2, \{\varphi \mapsto \ffalse\}) \text{ where } \varphi \twoheadrightarrow_{a_1} \varphi' \\
\qquad \text{ if } \llbracket \varphi' \rrbracket_2 \text{ is defined and } vars(\varphi') \subseteq vars(a_2) \\
(a_1,a_2,\{\varphi \mapsto \ffalse\}) \\
\end{array}\right.
\end{displaymath}
In the case of a partial transfer, the formula $\varphi$ is not set to $\ttrue$ since it is not yet fully taken into account into $A_2$.

\subsection{Combining domain transformers}
% \section{A General Cooperation Scheme}
\label{coop-scheme}

In order to complete our cooperation framework, we tackle the case where two domain transformers share abstract domains.
For instance, consider the formula $c_4 \eqdef (x = 0 \lor x = 1) \land x * y \leq 5$.
We can interpret $x = 0 \lor x = 1$ in $\mathsf{L}(\mathcal{B})$, which supports bound constraints with disjunctions, and $x * y \leq 5$ in $\mathsf{IPC}(\mathcal{B})$.
If we combine these two domains in a direct product $\mathsf{L}(\mathcal{B}) \times \mathsf{IPC}(\mathcal{B})$, the underlying box domain of each transformer will not be shared, because the direct product does not exchange information among its components.
Conversely, sometimes it is important for efficiency to keep two abstract elements of the same type separated.
Consider the example of two octagons with each $n$ distinct variables, the closure operator has a complexity of $\mathcal{O}(n^3) + \mathcal{O}(n^3)$ when two octagon elements are created, but $\mathcal{O}((n+n)^3)$ when merged.
Therefore, both possibilities of either merging or keeping the domains separated must be available.
To this aim, we propose the \textit{shared product} which is a direct product with named components and sharing among components.
To make the notation explicit, we define an element of the shared product as a list of abstract domain declarations.
As an example, the previous domain with ($\mathbf{D1}$) and without ($\mathbf{D2}$) a shared box are written as:
\begin{displaymath}
\begin{array}{c c}
\begin{array}{ll}
\mathbf{D1} =& \mathcal{B}~\mathit{box}; \\
&\mathsf{L}(\mathcal{B})~\mathit{lbox}(\mathit{box}); \\
&\mathsf{IPC}(\mathcal{B})~\mathit{ipc}(\mathit{box});
\end{array} \qquad&\qquad
\begin{array}{ll}
\mathbf{D2} =& \mathsf{L}(\mathcal{B})~\mathit{lbox}(\bot_\mathcal{B}); \\
&\mathsf{IPC}(\mathcal{B})~\mathit{ipc}(\bot_\mathcal{B});
\end{array}
\end{array}
\end{displaymath}
The line $\mathsf{L}(\mathcal{B})~\mathit{lbox}(\mathit{box});$ indicates that the underlying box domain of $\mathsf{L}(\mathcal{B})$ is shared and given by the element $\mathit{box}$.
We say that $\mathit{box}$ is a dependency of $\mathit{lbox}$.
Every element must be declared before being used as dependencies.
% Therefore, the network of abstract domains obtained forms a directed acyclic graph.
When no dependency is expected, the parameter is an unnamed bottom element, \emph{e.g.}, $\bot_\mathcal{B}$.
In that case, the boxes underlying $\mathsf{L}(\mathcal{B})$ and $\mathsf{IPC}(\mathcal{B})$ are not shared.
In order to define the shared product, we rely on two functions to respectively project and join the dependencies:
\begin{displaymath}
\pi: A \to A_1 \times \ldots \times A_n \qquad\qquad
\kappa: A \times A_1 \times \ldots \times A_n \to A
\end{displaymath}
\noindent
In the delayed product, we have $\pi((a_1,a_2,c)) = (a_1,a_2)$ and $\kappa((a_1,a_2,c), d_1, d_2) = (a_1 \sqcup d_1, a_2 \sqcup d_2,c)$.
We now define the shared product.

\begin{definition}[Shared product]
The shared product $\langle A_1~x_1(d^1_1,\ldots,d^1_m) \texttt{ ; } \ldots \texttt{ ; } A_n~x_n(d^n_1,\ldots,d^n_m), \leq\rangle$ is a direct product $A_1 \times \ldots \times A_n$ in which the $\mathit{closure}$ operator is interleaved with a reduction operator.
Let $a_i$ be an element of the product and $\pi_i(a_i) = (b_j,\ldots,b_k)$ the dependencies of $a_i$, where $b_\ell = \bot$ if $d^i_\ell = \bot$, for all $j \leq \ell \leq k$.
Then each $\rho_i$ is an idempotent and monotone function defined as:
\begin{displaymath}
\rho_i(a_1,\ldots,a_n) = (a_1,\ldots,a_j \sqcup b_j, \ldots, a_k \sqcup b_k, \ldots, \kappa_i(a_i,a_j,\ldots,a_k), \ldots, a_n)
\end{displaymath}
We define $\rho$ as the fixed point of $\rho_1 \circ \ldots \circ \rho_n$.
This reduction operator is applied when computing the closure: $\mathit{closure}((a_1,\ldots,a_n)) \eqdef \rho(\mathit{closure}_1(a_1),\ldots,\mathit{closure}_n(a_n))$.
The interpretation function can be extended to support named constraints:
\begin{displaymath}
\llbracket c\mathtt{:}x \rrbracket = (\bot_1,\ldots,\llbracket c \rrbracket_i,\ldots,\bot_n) \text{ where } x = x_i
\end{displaymath}
which is simpler to read than the index notation of the direct product.
\label{shared-product}
\end{definition}
We illustrate the two roles of $\rho_i$ with an example.
Let the element $a$ be $(\mathit{box, lbox, ipc}) \in \mathbf{D1}$.
% The fixed point computation of the reduction function $\mathbf{fp}(\rho)(a)$ merges all the box elements together.
% Therefore, after the fixed point, we have $\pi(\mathit{lbox}) = \pi(\mathit{ipc}) = \mathit{box}$.
Consider $\rho_2(\mathit{box},\mathit{lbox},\mathit{ipc}) = (\mathit{box} \sqcup \pi(\mathit{lbox}), \kappa(\mathit{lbox}, \mathit{box}), \mathit{ipc})$, which merges $\mathit{lbox}$ with the rest of the product.
First, $\rho_2$ merges the dependency of $\mathit{lbox}$ into $\mathit{box}$ with $\mathit{box} \sqcup \pi(\mathit{lbox})$.
% In general, $\rho_i$ the dependencies $(b_j,\ldots,b_k)$ of $a_i$ into their initial abstract elements $a_j \sqcup b_j, \ldots, a_k \sqcup b_k$.
% Second, it updates the dependencies $(b_j,\ldots,b_k)$ of $a_i$ with the initial abstract elements with $\kappa_i(a_i, (a_j,\ldots,a_k))$.
Second, $\rho_2$ updates the dependency of $\mathit{lbox}$ with $\mathit{box}$ using $\kappa(\mathit{lbox}, \mathit{box})$.
In general, since we compute a fixed point of $\rho$, which is also the least by the Knaster-Tarski fixed point theorem, the abstract domains and domain transformers are totally merged.
% The functions $\rho_i$ are idempotent and monotone.
% Intuitively, the fixed point of $\rho = \rho_1 \circ \ldots \circ \rho_n$ implies that all domain transformers sharing abstract domains have been merged, but no more.
% Formally, it is stated as follows.
% $\rho$ is monotone since the composition of monotone functions is monotone.
% Due to the Knaster-Tarski fixed point theorem, the fixed point of $\rho$ is the smallest fixed point.
% Since we have a fixed point, there is no additional dependency that can be merged, otherwise we could apply some $\rho_i$, and this would not be a fixed point.

% We included additional results on $\rho$ in~\ref{appendix-shared-product}.
In practice, the dependencies are implemented by using pointers.
Therefore, $\pi$ and $\kappa$ are defined implicitly for all abstract domains.
As in the former example, at any time a new information is available in $\mathit{box}$, it is automatically accessible to both $\mathsf{L}(\mathcal{B})$ and $\mathsf{IPC}(\mathcal{B})$ due to the sharing via pointers.

An advantage of this framework is that no effort is required by a domain transformer to be plugged into the shared product.
Moreover, the transformers are fully compositional w.r.t. the shared product, \emph{i.e.}, they can be combined with any other transformers without being modified.
We will illustrate the shared product in a larger example in the next section.

\section{Case study and evaluation}
\label{impl-experiments}

\paragraph{Flexible job shop scheduling}
\label{fjobshop}

Job shop scheduling is a well-known NP-hard combinatorial problem.
We have $n$ jobs and $m$ machines such that a job $1 \leq j \leq n$ is a series of $T_j$ tasks that must be scheduled on distinct machines in turn.
For each job $j$ and task $1 \leq t \leq T_j$, the duration of the task is written as $d_{j,t} \in \mathbb{Z}$, and the machine on which the task $t$ is performed is written as $m_{j,t} \in \{1,\ldots,m\}$.
The variables of the problem are the starting dates $s_{j,t}$ for every task $t$.
For each job, we must ensure that every task is finished before the next one starts (precedence constraints):
\begin{equation}
\forall{1\leq j\leq n},~\forall{1 \leq t \leq T_j - 1},~s_{j,t} + d_{j,t} \leq s_{j,t+1}
\label{prec-constraint}
\end{equation}
\noindent
%In addition, t
Two tasks of two different jobs must not use the same machine at the same time:
\begin{equation}
\begin{array}{l}
\forall{1\leq i < j \leq n},~\forall{1 \leq t \leq T_i},~\forall{1 \leq u \leq T_j}\\
\qquad\qquad m_{i,t} = m_{j,u} \Rightarrow s_{i,t} + d_{i,t} \leq s_{j,u} \lor s_{j,u} + d_{j,u} \leq s_{i,t}
\end{array}
\label{disjunctive}
\end{equation}
\noindent
The disjunctive constraints ensure each pair of tasks $(t,u)$ using the same machine do not overlap.
Usually, the goal is to find a schedule of the tasks %such that it finishes
finishing as early as possible. Therefore, it is an optimization problem that seeks to minimize the makespan.
%, minimizing the makespan:
\begin{equation}
\forall{1\leq j\leq n},~s_{j,T_j} + d_{j,T_j} \leq \mathit{makespan}
\label{makespan}
\end{equation}
%Therefore, it is an optimization problem that seeks to minimize the makespan.

% The open shop scheduling variant removes the order in which the tasks of a job must be processed.
% Hence, the notion of ``task'' is merged with the one of job, and the precedence constraints in Eq.~\eqref{prec-constraint} are removed.

% The flow shop scheduling variant requires that every task $t_1,\ldots,t_n$ is processed in order on the machine $1,\ldots,m$.
% The order is the same for every job, which is not the case in the job shop scheduling problem.

% Parallel machine scheduling is a problem where each task can be scheduled on several different machines.
% We have the decision variables $p_{j,t,k} \in \{0,1\}$ which is equal to one if the task $t$ is executed on the machine $k$.
% It is subject to the following constraints:
% \begin{equation}
% \forall{1\leq j\leq n},~\forall{1 \leq t \leq T_j},~\sum_{1 \leq k \leq m} p_{j,t,k} = 1
% \label{parallel-machines}
% \end{equation}

The flexible job shop scheduling problem~\cite{brucker-job-shop-1990} generalizes the job shop scheduling to multiple machines.
A task can be scheduled on a possible set of machines which might have different processing times for the same task.
The model is now parametrized by a set of possible machines $M_{j,t} \subseteq \{1,\ldots,m\}$ for each task $t$, and by a duration $dur_{j,t,m} \in \mathbb{Z}$ depending on the task \textit{and} the machine.
The parameter $m_{j,t}$ of the job shop problem becomes a decision variable $m_{j,t} \in M_{j,t}$ modeling on which machine every task $t$ is run.
The duration of a task depends on the machine on which it is run, thus every $d_{j,t}$ becomes a decision variable as well:
%This is captured in the following constraint:
\begin{equation}
\forall{1\leq j\leq n},~\forall{1 \leq t \leq T_j},~\bigvee_{k \in M_{j,t}} m_{j,t} = k \land d_{j,t} = dur_{j,t,k}
\label{alternative}
\end{equation}
Constraints~\eqref{prec-constraint},~\eqref{disjunctive} and~\eqref{makespan} stay syntactically the same but over decision variables instead of parameters.
% The constraint~\eqref{parallel-machines} is implicitly encoded in the domain of the decision variable $m_{j,t} \in M_{j,t}$.

\paragraph{Crafting abstract domains for the flexible job shop}

The abstract domain $\mathsf{L}(\mathsf{IPC}(\mathcal{B}))$ is expressive enough to treat the full flexible job shop scheduling problem.
However, as expected it is not very efficient.
Octagons are more efficient than boxes on precedence constraints.
To achieve that, we build an abstract domain, that we name $\mathbf{FJS_1}$, based on boxes and octagons:
\begin{displaymath}
\begin{array}{l}
\mathcal{B}~\mathit{box}; \\
\mathsf{O}~\mathit{oct}; \\
\mathsf{L}(\mathsf{IPC}(\mathcal{B} \times \mathsf{O}))~\mathit{any}((\mathit{box}, \mathit{oct}));
\end{array}
\end{displaymath}
We note the usage of nested parenthesis $((\mathit{box}, \mathit{oct}))$ in order to define the dependencies of nested abstract domains.
To ease the distribution of constraints in abstract domains, we declare $\mathit{box}$ and $\mathit{oct}$ although they are not shared.
In the case of $\mathit{oct}$, there is another subtlety: it is necessary to declare $\mathit{oct}$ otherwise its $\mathit{closure}$ operator will not be called since $\mathsf{IPC}$ does not call the closure of its underlying domain.
The next step is to distribute each constraint in the components of $\mathbf{FJS_1}$.
% In practice, the creation of an abstract domain and the distribution of constraints is a round-trip process.
At the first sight, octagons are of limited interest because all precedence constraints are defined on three variables.
Nevertheless, for most instances of the flexible job shop, we observe that some tasks can only be executed on one machine, or some tasks take the same time on all machines.
Hence, some precedence constraints are immediately octagonal since the duration is fixed.
Constraints in Eq.~\eqref{prec-constraint} are distributed in $\mathbf{FJS_1}$ as follows:
\begin{displaymath}
\forall{1\leq j\leq n},~\forall{1 \leq t \leq T_j - 1},
\left\{ \begin{array}{ll}
     (s_{j,t} + d' \leq s_{j,t+1})\mathtt{:}\mathit{oct} & \text{ if } \{d'\} = \{\mathit{dur}_{j,t,k} \;|\; k \in M_{j,t}\} \\
     (s_{j,t} + d_{j,t} \leq s_{j,t+1})\mathtt{:}\mathit{any} & \text{ otherwise } \\
    \end{array} \right.
\end{displaymath}
\noindent
It is the same for Eq.~\eqref{makespan}.
All the others constraints can be interpreted in $\mathit{any}$.
In addition, since $\mathsf{IPC}$ relies on the underlying domain to represent the variable's domains, we must add all the variables in the box domain first, for each job $j$ and task $t$:
\begin{displaymath}
(s_{j,t} \leq h \land \mathit{makespan} \leq h \land m_{j,t} \leq \mathit{max}(M_{j,t}) \land d_{j,t} \leq \mathit{max}(\{\mathit{dur}_{j,t,k} \;|\; k \in M_{j,t}\}))\mathtt{:}\mathit{box}
\end{displaymath}
The constant $h$ represents the horizon, which is the latest date at which a task can start.

In $\mathbf{FJS_1}$, we statically dispatch the precedence constraints when creating the model.
Because of the delayed product, we can dynamically dispatch the precedence constraints when the durations become fixed, that is, during the solving process.
Precedence constraints can be solved efficiently in the domain $\mathbf{PREC} = \mathsf{DP}(\mathsf{IPC}(\mathcal{B} \times \mathsf{O}), \mathsf{O})$.
The precedence constraints with three variables are interpreted in $\mathsf{IPC}(\mathcal{B} \times \mathsf{O})$, similarly to $\mathbf{FJS_1}$.
In addition, exact and over-approximations of precedence constraints with two variables are dynamically sent in the octagon element thanks to the delayed product.
To experiment with this idea, we craft the abstract domain $\mathbf{FJS_2}$ as follows:
\begin{displaymath}
\begin{array}{l r}
\mathcal{B}~\mathit{box}; &\\
\mathsf{O}~\mathit{oct}; &\\
\mathbf{PREC}~\mathit{prec}(((\mathit{box}, \mathit{oct})), \mathit{oct}); & \text{precedence constraints Eq.~\eqref{prec-constraint} and \eqref{makespan}} \\
\mathsf{L}(\mathcal{B} \times \mathbf{PREC})~\mathit{no\_overlap}(\mathit{box}, \mathit{prec});& \text{non-overlap constraints Eq.~\eqref{disjunctive}}\\
\mathsf{L}(\mathcal{B})~\mathit{alternatives}(\mathit{box});& \text{machine alternatives constraints Eq.~\eqref{alternative}}
\end{array}
\end{displaymath}
\noindent
The constraints can be annotated with the name of the relevant abstract domains, similarly to what we did for $\mathbf{FJS_1}$.
The formula in Eq.~\eqref{disjunctive} is constituted of an implication and disjunctions that can be interpreted in the abstract domain $\mathit{no\_overlap}$.
The atoms of the formula are either equality constraints ($m_{i,t} = m_{j,u}$) that can be interpreted by the box element $\mathcal{B}$, or precedence constraints that can be interpreted in $\mathbf{PREC}$.
Finally, Eq.~\eqref{alternative} could be interpreted in $\mathit{no\_overlap}$, but since $\mathbf{PREC}$ is not useful for this formula, we can avoid the unnecessary indirection by interpreting this formula in the dedicated $\mathit{alternatives}$ domain.

\paragraph{Implementation and evaluation}

We have implemented the abstract domains and transformers presented above in the constraint solver \textsf{AbSolute}~\cite{pelleau-constraint-2013}, which is programmed in \textsf{OCaml} and available online\footnote{The version of \textsf{AbSolute} used in this paper is accessible at \texttt{\href{https://github.com/ptal/AbSolute/tree/iclp2020}{github.com/ptal/AbSolute/tree/iclp2020}}.}.
Our experiments are replicable and all the results are also publicly available.
One of our design goals was to keep the solver as close as possible to its underlying theory.
To achieve this goal, we relied on \textsf{OCaml} functors, such that each domain transformers is a functor parametrized by its sub-domains.
See~\ref{fjs1-absolute} for an example of the \textsf{OCaml} code modeling the abstract domain $\mathbf{FJS_1}$.

% We evaluated the abstract domains $\mathsf{L}(\mathsf{IPC}(\mathcal{B}))$ and $\mathsf{L}(\mathsf{O})$ on the job shop scheduling problem.
% As expected, $\mathsf{L}(\mathsf{O})$ finds better lower bound than $\mathsf{L}(\mathsf{IPC}(\mathcal{B}))$, but constraint solvers such as \textsf{GeCode} and \textsf{Chuffed} are largely better.
% As the main topic of this paper is cooperation, we focus on the evaluation of the abstract domains $\mathbf{FJS_2}$ and $\mathbf{FJS_3}$ on the flexible job shop scheduling problem.

% \paragraph{Experimental setup}

The experiments are all performed on an Intel(R) Xeon(TM) E5-2630 V4 running at 2.20GHz on GNU Linux.
We evaluate three solvers: \textsf{AbSolute v0.10}, \textsf{GeCode v6.1} \cite{gecode} which is a state of the art propagation-based constraint solver, and \textsf{Chuffed v0.10.4}~\cite{Ohrimenko:2009:PVL:1553323.1553342} which is a hybrid solver between constraint propagation and SAT solving.
\textsf{Chuffed} shows excellent results on scheduling problems including the flexible job shop~\cite{schutt-scheduling-2013}.
% \begin{itemize}
%   \item \textsf{AbSolute v0.10} compiled with the version \texttt{4.09.1+flambda} of the \textsf{OCaml} compiler.
%   \item \textsf{GeCode v6.1}~\cite{gecode} is a state of the art propagation-based constraint solver in C++.
%   \item \textsf{Chuffed v0.10.4}~\cite{Ohrimenko:2009:PVL:1553323.1553342} is a hybrid solver between constraint propagation and SAT solving.
%   It shows excellent results on scheduling problems including the flexible job shop~\cite{schutt-scheduling-2013}.
% \end{itemize}
Since we primarily focus on evaluating the propagation process, we selected a search strategy available in all solvers.
This strategy, that we call \texttt{dms}, assigns the domain of each variable to its lower bound and selects the variables with the smallest domain first (\textit{first-fail strategy}).
Furthermore, we first assign all durations, then all machines, and finally the starting dates variables.
% We also tested the domain/weighted-degree strategy in \textsf{GeCode} and \textsf{Chuffed} as it is known to perform well on scheduling problems~\cite{grimes-solving-2015}.
% In order to increase the strength of the different solvers, we relied on a constraint model of the flexible job shop containing a \textit{cumulative} global constraint in \textsf{GeCode} and \textsf{Chuffed}.
% The strength of \textsf{AbSolute} relies on using cooperation schemes between the box and octagon abstract domains.
% An advantage of \textsf{AbSolute} is that the model stays identical across the different tested abstract domains.
We experimented on two sets of instances, named \texttt{edata} and \texttt{rdata}, due to~\citet{hurink-flexible-jobshop-1994}, which are still challenging today~\cite{schutt-scheduling-2013}.
The difference among the sets is the average ratio of machines available per task, \texttt{edata} has few machines per task, and for \texttt{rdata} most tasks can be scheduled on several machines.
% In increasing order we have the sets \texttt{edata}, \texttt{rdata} and \texttt{vdata}.
% The more machines are available, the harder it is to solve these instances, thus \texttt{vdata} is the hardest set.
Each solver is run once on each instance for a maximum of 10 minutes.

% \paragraph{Analysis of the results}

The results are exposed in Table~\ref{experiments-fjs}.
For each solver, we read in column $\Delta_{LB}$ the percentage of how far is the obtained solution from the best known lower bound.
For example, the value $\mathbf{66}$ in bold in Table~\ref{experiments-fjs} indicates that \textsf{Chuffed} found $66$ strictly better bounds than \textsf{GeCode}.

Firstly, although \textsf{AbSolute} is only a prototype, we observe that on \texttt{edata} it finds 36 bounds that are better than the ones found by \textsf{GeCode}, and 23 bounds better than \textsf{Chuffed}.
This demonstrates that communication between domains brings a computational advantage.
For data sets with more machines, the efficiency of \textsf{AbSolute} drops behind the other solvers.
This is because we do not treat machines in a special way in contrast to \textsf{GeCode} or \textsf{Chuffed} that use a \textit{cumulative} global constraint.

Secondly, the difference between $\mathsf{FJS_1}$ and $\mathsf{FJS_2}$ is less obvious since $\mathsf{FJS_2}$ is only able to find a few better bounds.
%Actually, t
This is explained by \texttt{dms} which fixes the durations at the top of the search tree, thus all over-approximations are exchanged early in the search, and do not impact the propagation of most of the nodes.
However, for the flexible job shop, \texttt{dms} was the best strategy we tried in \textsf{AbSolute}.
Nevertheless, we found that $\mathsf{FJS_2}$ was able to find its best bound 20\% quicker than $\mathsf{FJS_1}$ w.r.t. the number of nodes for about 90\% of the instances.
This confirms that better cooperation leads to better pruning in general.

We believe that this framework achieves modularity because new abstract domains can be seamlessly combined with existing ones in order to treat new constraints.
Besides, the presented abstract domains and transformers \textit{are not specifically designed} for the jobshop scheduling problem.
These transformers are applicable to numerous other problems.
\citet{ziat-combination-2019} combine boxes and polyhedra to solve continuous constraint problems; their product is a particular instance of our delayed product.
Furthermore, the delayed product could also be applied to car sequencing problems which involve linear constraints and octagonal constraints~\cite{brand-sequence-decomposition-2007}.

\begin{table}
\centering
\footnotesize
\begin{tabular}{l | c c c c c | c c c c c}
solver & $\Delta_{LB}$(\%) & $\mathbf{FJS_1}$ & $\mathbf{FJS_2}$ & \textsf{GeCode} & \textsf{Chuffed} & $\Delta_{LB}$(\%) & $\mathbf{FJS_1}$ & $\mathbf{FJS_2}$ & \textsf{GeCode} & \textsf{Chuffed} \\
\hline
 & \multicolumn{5}{c}{\texttt{edata}} & \multicolumn{5}{c}{\texttt{rdata}} \\
\hline
$\mathbf{FJS_1}$ & 20.4 & $\square$ & 0 & 36 & 23 & 46.4 & $\square$ & 0 & 4 & 0 \\
$\mathbf{FJS_2}$ & 20.4 & 1 & $\square$ & 36 & 23 & 46.4 & 2 & $\square$ & 4 & 0 \\
\textsf{GeCode} & 20.9 & 30 & 30 & $\square$ & 0 & 31.7 & 61 & 61 & $\square$ & 0 \\
\textsf{Chuffed} & 12.2 & 43 & 43 & \textbf{66} & $\square$ & 24.2 & 66 & 66 & 66 & $\square$ \\
% \hline
% \multicolumn{9}{c}{\texttt{rdata}} \\
% \hline
% $\mathbf{FJS_2}$ & & & $\square$ & & & & & \\
% $\mathbf{FJS_3}$ & & & & $\square$ & & & & \\
% \textsf{GeCode} (\texttt{dms}) & & & & & $\square$ & & & \\
% \textsf{GeCode} (\texttt{dwd}) & & & & & & $\square$ & & \\
% \textsf{Chuffed} (\texttt{dms}) & & & & & & & $\square$ & \\
% \textsf{Chuffed} (\texttt{dwd}) & & & & & & & & $\square$ \\
% \hline
% \multicolumn{9}{c}{\texttt{vdata}} \\
% \hline
% $\mathbf{FJS_2}$ & & & $\square$ & & & & & \\
% $\mathbf{FJS_3}$ & & & & $\square$ & & & & \\
% \textsf{GeCode} (\texttt{dms}) & & & & & $\square$ & & & \\
% \textsf{GeCode} (\texttt{dwd}) & & & & & & $\square$ & & \\
% \textsf{Chuffed} (\texttt{dms}) & & & & & & & $\square$ & \\
% \textsf{Chuffed} (\texttt{dwd}) & & & & & & & & $\square$ \\
\end{tabular}
\caption{Experiments on the flexible job shop scheduling problem (2 * 66 instances).}
\label{experiments-fjs}
\end{table}

\section{Conclusion and future work}

Abstract constraint solving is an exciting new area of research where the foundation of constraint solving is reformulated as abstract interpretation.
We contribute to this area by developing a modular abstract framework allowing solvers and cooperation schemes to be combined seamlessly.
To this end, we have introduced the \textit{interval propagators completion} and the \textit{delayed product} domain transformers implementing two cooperation schemes.
% These transformers implement different cooperation schemes between domains in a modular way.
Moreover, we have introduced the \textit{shared product} to modularly combine domain transformers.

There are three important perspectives of this work.
The first one is to catch up with ACDCL by incorporating conflict learning in \textsf{AbSolute}, which is crucial for efficiency as notably demonstrated  by lazy clause generation in \textsf{Chuffed}~\cite{Ohrimenko:2009:PVL:1553323.1553342}.
Secondly, an inference mechanism to automatically build the right abstract domain to solve a logical formula would be interesting.
This is not trivial as a formula might be interpretable in several abstract domains, thus expressiveness and efficiency must be taken into account in the inference process.
Finally, it is most often necessary to program a customized search strategy in order to achieve better solving efficiency.
This framework only supports combination of search strategies in a restricted way.
We suggest to rely on \textit{spacetime programming}, a synchronous and concurrent search strategy language operating over lattice structures to integrate search in this framework~\cite{PPDP19-Talbot}.

% \ack We would like to thank the referees for their comments, which helped improve this paper considerably

\label{sect:bib}

{
\small
\bibliographystyle{acmtrans}
\bibliography{reference}}

\newpage
\appendix

\section{$\mathbf{FJS_1}$ in \textsf{AbSolute}}
\label{fjs1-absolute}

We give an example of how to turn $\mathbf{FJS_1}$ into an abstract domain at the implementation-level.
We first create the leaves of the combination, in this case the box and octagon abstract domains:

\begin{verbatim}
  module Box = Box_base(Box_split.First_fail_LB)(Bound_int)
  module Octagon = Octagon.Make(ClosureHoistZ)(Octagon_split.MSLF)
\end{verbatim}
\noindent
These two domains are parametrized by a \textit{split} operator, we further indicate that we need a box over integers, and an octagon over integers as well---\texttt{ClosureHoistZ} is a possible implementation of the closure operator for octagon.
We now encapsulate these domains in the interval propagator completion:

\begin{verbatim}
  module BoxOct = Direct_product(Prod_cons(Box)(Prod_atom(Octagon)))
  module IPC = Propagator_completion(Box.Vardom)(BoxOct)
\end{verbatim}
\noindent
The completion is additionally parametrized by a variable domain (here the same as box) which indicates the domain in which the propagation takes place.
For instance, if we have a completion over integers and floating point numbers (\emph{i.e.}, $\mathsf{IPC}(\mathcal{B}(\mathbb{Z}) \times \mathcal{B}(\mathbb{F}))$), the constraints could be evaluated in a rational domain since it subsumes both integers and floating point numbers.
The completion takes care of the required conversions.

The remaining step is to derive the logic completion of $\textsf{IPC}$, and to gather all components in the shared product:

\begin{verbatim}
  module LC = Logic_completion(IPC)
  module FJS = Shared_product(
    Prod_cons(BoxOct)(
    Prod_cons(IPC)(
    Prod_atom(LC))))
\end{verbatim}
\noindent
This product can then be instantiated with empty abstract domains, and solved with a fixed point algorithm as presented in Section~\ref{abstract-dom}.

This demonstrates that abstract domains are composed in a modular way at the theoretical level, but also at the implementation level.

\end{document}